\documentclass[11pt]{article}

\usepackage[preprint]{latex/acl}

\usepackage{times}
\usepackage{latexsym}

\usepackage[T1]{fontenc}

\usepackage[utf8]{inputenc}

\usepackage{microtype}

\usepackage{graphicx}

\usepackage{graphicx}
\usepackage{subcaption}
\usepackage{float}

\usepackage{booktabs}
\usepackage{array}
\usepackage{multirow}
\usepackage{tabularx}
\usepackage{threeparttable}

\usepackage{xcolor}
\usepackage{colortbl}

\usepackage{amsmath}
\usepackage{amssymb}
\usepackage{amsfonts}
\usepackage{mathtools}
\mathtoolsset{showonlyrefs}
\usepackage{amsthm}
\usepackage{bm}
\usepackage{bbm}
\usepackage{dsfont}

\usepackage{algorithm}
\usepackage{algorithmic}

\usepackage{enumitem}
\usepackage{enumerate}

\usepackage{url}
\usepackage{multicol}
\usepackage{indentfirst}
\usepackage{tikz}
\usepackage{bbding}
\usepackage{fontawesome}
\usepackage[normalem]{ulem}
\useunder{\uline}{\ul}{}
\usepackage{lipsum}
\usepackage{natbib}
\usepackage{appendix}
\usepackage{import}

\usepackage[skins,breakable]{tcolorbox}

\usepackage{hyperref}
\usepackage{cleveref}

\definecolor{lossGreen}{RGB}{0, 130, 60}
\definecolor{gainRed}{RGB}{200, 30, 30}

\newcommand{\impgain}[1]{
  \textcolor{gainRed}{$\uparrow$\,#1\%}
}

\newcommand{\imploss}[1]{
  \textcolor{lossGreen}{$\downarrow$\,#1\%}
}

\theoremstyle{plain}
\newtheorem{theorem}{Theorem}[section]

\newtheorem{corollary}[theorem]{Corollary}
\theoremstyle{definition}

\theoremstyle{remark}
\newtheorem*{remark}{Remark}

\crefname{theorem}{Theorem}{Theorems}
\crefname{proposition}{Proposition}{Propositions}
\crefname{lemma}{Lemma}{Lemmas}
\crefname{corollary}{Corollary}{Corollaries}
\crefname{definition}{Definition}{Definitions}
\crefname{assumption}{Assumption}{Assumptions}
\crefname{remark}{Remark}{Remarks}

\crefname{equation}{equation}{equations}
\Crefname{equation}{Equation}{Equations}

\newcommand\blfootnote[1]{
  \begingroup
  \renewcommand\thefootnote{}\footnote{#1}
  \addtocounter{footnote}{-1}
  \endgroup
}

\useunder{\uline}{\ul}{}
\definecolor{skyblue}{RGB}{0,120,215}

\newcounter{mycounter} 

\newcommand{\findingbox}[2][]{
    \refstepcounter{mycounter} 
    \if\relax\detokenize{#1}\relax\else\label{#1}\fi 
    \begin{tcolorbox}[colframe=black,
                      arc=1pt,
                      boxsep=-2pt,
                      before skip=5pt,  
                      after skip=5pt,   
                      ]
        \noindent{\textbf{\textit{Finding \themycounter.}}} #2 
    \end{tcolorbox}
}

\title{Learning from Hard Prompts: Difficulty-aware Advantage \\
Amplification in Dynamic Sampling}

\author{
Siyuan Gan$^{1}$,
Yuhan Li$^{1,2}$,
Xiran Wang$^{1,2}$,
Linjian Meng$^{\dag 2}$,
Boyan Wang$^{\dag 1}$,\\
\textbf{Zhen Zhao$^{2}$,
Jing Huo$^{1}$,
Lei Bai$^{2}$,
Yang Gao$^{1}$} \\
$^1$State Key Laboratory of Novel Software Technology, Nanjing University, Nanjing, China \\
$^2$Shanghai Artificial Intelligence Laboratory, Shanghai, China\\
gansiyuan@smail.nju.edu.cn,
menglinjian@pjlab.org.cn,
boyanwang@nju.edu.cn
}

\makeatletter
\def\blfootnote#1{
  \begin{NoHyper}
  \xdef\@thefnmark{}
  \@footnotetext{#1}
  \end{NoHyper}
}
\makeatother

\begin{document}
\maketitle
\blfootnote{$\dag$ Corresponding author.}
\begin{abstract}
Decoupled Clip and Dynamic Sampling Policy Optimization (DAPO) is a prominent variant of Group Relative Policy Optimization (GRPO). DAPO introduces several improvements over GRPO. Among these, \textit{Dynamic Sampling} contributes the most to DAPO's accuracy gains relative to GRPO. 
To improve accuracy, Dynamic Sampling enhances training stability by eliminating zero policy gradients from zero advantages. Specifically, it avoids such zero gradients by filtering out prompts where sampled responses are either entirely correct or incorrect.
However, our theoretical analysis shows that Dynamic Sampling decrease training efficiency as it cannot effectively utilize hard-to-sample correct responses on hard prompts. 
Formally, it asymmetrically amplifies the advantages of distinct responses to the same prompts. On hard prompts, incorrect responses undergo greater amplification than correct ones. This leads the model to avoid generating the observed incorrect responses rather than capitalizing on the hard-to-sample correct ones on hard prompts, resulting in low training efficiency.
To improve training efficiency, we propose \textit{Direct Advantage Amplification} (DAA), which amplifies the advantages of hard-to-sample correct responses on hard prompts, as obtained by Dynamic Sampling. This ensures that, when Dynamic Sampling is used, these hard-to-sample responses can be effectively capitalized on, implying higher training efficiency. 
By integrating DAA into DAPO, we obtain \textit{Difficulty-aware Advantage Amplification Policy Optimization} (DA3PO), which is implemented with fewer than 30 lines of code from DAPO. 
Experiments show that DA3PO significantly outperforms GRPO and other classical GRPO variants.

\end{abstract}

\section{Introduction}\label{sec:Introduction}
The release of DeepSeek-R1~\citep{deepseek-r1} has drawn significant attention to Reinforcement Learning from Verifiable Rewards (RLVR), establishing Group Relative Policy Optimization (GRPO)~\citep{grpo} as a prevailing approach for training Large Language Models (LLMs). Then, to enhance the accuracies across a variety of tasks, various GRPO variants have been proposed, such as Decoupled Clip and Dynamic Sampling Policy Optimization (DAPO)~\citep{dapo} and Group Sequence Policy Optimization (GSPO)~\citep{gspo}, which have further extended GRPO. Among them, DAPO is one of the most popular variants~\citep{liu2025prorl,yu2026knowrl,yuchen2026predictability,zhang2026patho,fipo}.

Compared to GRPO, DAPO introduces several significant improvements. Among these, Dynamic Sampling contributes the most to DAPO's accuracy enhancements from GRPO, as presented by the experimental results of the original paper of DAPO~\citep{dapo}. To improve the accuracy, Dynamic Sampling enhances training stability, a critical factor for achieving accuracy enhancements, by eliminating zero policy gradients caused by zero advantages~\citep{dapo}.
Specifically, such zero policy gradients come from prompts where sampled responses are entirely correct or incorrect. Therefore, to avoid producing such zero policy gradients, Dynamic Sampling over-samples prompts and filters out prompts where sampled responses are entirely correct or incorrect.

Unfortunately, while Dynamic Sampling improves training stability, our theoretical analysis indicates that it will reduce training efficiency, another crucial aspect for attaining accuracy enhancements. Specifically, the decline in training efficiency primarily stems from Dynamic Sampling's inability to effectively capitalize on hard-to-sample correct responses on hard prompts. More precisely, we demonstrate that Dynamic Sampling will asymmetrically amplify the advantages of different responses to the same prompts, with this amplification varying based on prompt difficulty. On hard prompts, incorrect responses undergo greater amplification relative to correct ones. This asymmetric amplification causes the model to avoid generating observed incorrect responses, rather than capitalizing on the correct ones on hard prompts, thereby reducing training efficiency as the correct responses on hard prompts are exceedingly hard to sample.

To improve training efficiency, we propose \textit{Direct Advantage Amplification} (DAA). Its key insight is to amplify the advantages of hard-to-sample correct responses on hard prompts, as obtained by Dynamic Sampling. Formally, after obtaining such advantages via Dynamic Sampling, we multiply these advantages by a positive constant greater than one to amplify them. This ensures that, when Dynamic Sampling is used, these hard-to-sample correct responses on hard prompts can be effectively capitalized on, implying higher training efficiency. Then, by integrating DAA into DAPO, we obtain \textit{Difficulty-aware Advantage Amplification Policy Optimization} (DA3PO). More surprisingly, since the implementation of DAA is straightforward, DA3PO requires fewer than 30 lines of code derived from DAPO (see more details in \Cref{sec:method}).

To evaluate our DA3PO, we conduct experiments on seven mathematical reasoning benchmarks: AIME24, AIME25, AIME26, AMC, Minerva, Olympiad and MATH. 

Experimental results demonstrate that DA3PO yields consistent accuracy enhancements across all tested models and benchmarks, significantly outperforming GRPO and classical GRPO variants DAPO and GSPO.

\section{Preliminary}\label{sec:Preliminary}

\subsection{Group Relative Policy Optimization (GRPO)}

GRPO~\cite{grpo} computes advantages purely from group-level rewards, making it an effective approach for optimizing a policy $\pi_\theta$. At each training step of GRPO training, a prompt $x$ is sampled from the dataset $\mathcal{D}$ and the policy $\pi_\theta$ then generates a group of $G$ responses $\{o_1, \dots, o_G\}$, where the conditional probability of response $o_i$ is:
\begin{equation}
\pi_\theta(o_i \mid x) = \prod_{t=1}^{|o_i|} \pi_\theta(o_{i,t} \mid x,\, o_{i,<t}).
\end{equation}
Let $\pi_{\theta_{\mathrm{old}}}$ denote the behavior policy before each gradient update. The importance sampling ratio at token position $t$ of the $i$-th response is:
\begin{equation}
\label{eq:importance_ratio}
r_{i,t}(\theta) = \frac{\pi_\theta(o_{i,t} \mid x,\, o_{i,<t})}{\pi_{\theta_{\mathrm{old}}}(o_{i,t} \mid x,\, o_{i,<t})}.
\end{equation}
A rule-based verifier assigns a reward $R_i$ which is defined as $R_i = 1$ if correct, and $R_i = -1$ otherwise. The advantage for the $i$-th response at token position $t$ is computed:
\begin{equation}
\label{eq:grpo_advantage}
\hat{A}_{i,t} = \frac{R_i - \mathrm{mean}(\{R_j\}_{j=1}^G)}{\mathrm{std}(\{R_j\}_{j=1}^G)}.
\end{equation}
The policy is updated by maximizing a clipped objective with per-sample loss averaging:
\begin{equation}
\label{eq:grpo_objective}
\begin{split}
    \mathcal{J}_{\mathrm{GRPO}}(\theta) = \mathbb{E}_{x \sim \mathcal{D},\; \{o_i\}_{i=1}^G \sim \pi_{\theta_{\mathrm{old}}}(\cdot \mid x)}
    \Bigg[ \frac{1}{G} \\
    \sum_{i=1}^{G} \frac{1}{|o_i|} \sum_{t=1}^{|o_i|} \mathcal{L}_{i,t}(\theta) \Bigg],
\end{split}
\end{equation}
where the per-token clipped loss is:
\begin{equation}
\label{eq:grpo_loss}
\begin{split}
    \mathcal{L}_{i,t}(\theta) = \min\!\Big(
    r_{i,t}(\theta)\,\hat{A}_{i,t},\;
    \mathrm{clip} (r_{i,t}(\theta),\, \\
    1-\epsilon,\, 1+\epsilon)\,\hat{A}_{i,t}
    \Big),
\end{split}
\end{equation}
and $\epsilon$ is the symmetric clipping range.

\subsection{Decoupled Clip and Dynamic Sampling Policy Optimization (DAPO)}
\label{sec:preliminary_dapo}

Compared to GRPO, DAPO~\cite{dapo} introduces several improvements to enhance training efficiency and training stability, resulting in higher accuracy than GRPO. 

\paragraph{Clip-Higher.} 
Firstly, to mitigate training inefficiency arising from low exploration, DAPO applies Clip-Higher to increase the probability of low-probability exploration tokens. Formally, Clip-Higher decouples the symmetric clipping range in Eq.~\eqref{eq:grpo_loss} into asymmetric bounds: 
\begin{equation} 
\label{eq:dapo_clip} 
\mathrm{clip}(r_{i,t}(\theta),\, 1-\epsilon_l,\, 1+\epsilon_h), \epsilon_h > \epsilon_l.
\end{equation}

\paragraph{Overlong Reward Shaping.} 
Then, to mitigate low training efficiency prolonged by sampling of overly long responses, DAPO adds an overlong reward to the GRPO correctness reward to avoid sampling too long responses: 
\begin{equation} 
\label{eq:dapo_reward} 
R_i = R_{\mathrm{acc}}(o_i) + R_{\mathrm{len}}(o_i), 
\end{equation}
the overlong reward shaping is: 
\begin{equation}
\label{eq:dapo_length_reward}
\small 
R_{\mathrm{len}}(o_i) = \begin{cases} 0, & |o_i| \leq L_{m} - L_{c} \\[4pt] \dfrac{(L_{m} - L_{c}) - |o_i|}{L_{c}}, & L_{m} - L_{c} < |o_i| \leq L_{m} \\[4pt] -1, & |o_i| > L_{m}.
\end{cases} 
\end{equation} 
Here $L_m$ is the maximum generation length and $L_c$ is the soft punish cache. 

\paragraph{Token-Level Loss.} 
In addition, to improve training stability, DAPO replaces the per-sample normalizer of GRPO with a single global token-level normalizer in Eq.~\eqref{eq:dapo_objective}:
\begin{equation}
\mathbb{E}_{x \sim \mathcal{D},\; \{o_i\}_{i=1}^G \sim \pi_{\theta_{\mathrm{old}}}(\cdot \mid x)}
    \Bigg[ \frac{1}{\sum_{i=1}^{G}|o_i|} \\
    \sum_{i=1}^{G} \sum_{t=1}^{|o_i|} \mathcal{L}_{i,t}(\theta) \Bigg].
\end{equation}
This formulation ensures that every token contributes equally to the gradient, thereby preventing an unhealthy increase in entropy~\cite{dapo}, which would otherwise degrade training stability.

\paragraph{Dynamic Sampling.} 
Lastly, to address low training stability caused by zero policy gradients, DAPO utilizes Dynamic Sampling. Specifically, when a group of sampled responses $\{o_i\}_{i=1}^G$ of a prompt receive the same reward, the advantage is zero, which yields zero policy gradients, shrinking the magnitude and increasing the noise sensitivity of the batch gradient, thereby degrading sample utilization~\cite{dapo}. To address this problem, Dynamic Sampling ensures that for all training prompts, the sampled responses $\{o_i\}_{i=1}^G$ have distinct rewards. Formally, we define $p = \mathbb{E}_{o \sim \pi_\theta(\cdot \mid x)} [\mathbb{I}[R_{\mathrm{acc}}(o) > 0]]$ for the probability that a response sampled from $\pi_\theta(\cdot \mid x)$ is correct, and define the empirical positive-response rate within a group as

$\hat{p} = \frac{1}{G}\sum_{i=1}^{G}\mathbb{I}[R_{\mathrm{acc}}(o_i) > 0]$, 

where $\mathbb{I}[\cdot]$ is the indicator function. Dynamic Sampling will over-sample and filter out prompts, leaving all prompts in the batch with $0 < \hat{p} < 1$ and keeping a consistent number of prompts. 

Integrating all four improvements, the DAPO objective takes the form: 
\begin{equation}
\label{eq:dapo_objective}
\small
\begin{split}
    \mathcal{J}_{\mathrm{DAPO}}(\theta) = \mathbb{E}_{x \sim \mathcal{D},\; \{o_i\}_{i=1}^G \sim \pi_{\theta_{\mathrm{old}}}(\cdot \mid x)}
    \Bigg[ \frac{1}{\sum_{i=1}^{G}|o_i|} \\
    \sum_{i=1}^{G} \sum_{t=1}^{|o_i|} \mathcal{L}_{i,t}(\theta) \Bigg],
    \quad \text{s.t.} \quad 0 < \hat{p} < 1,
\end{split}
\end{equation}
where the per-token clipped surrogate loss is:
\begin{equation}
\label{eq:dapo_loss}
\begin{split}
    \mathcal{L}_{i,t}(\theta) = \min\!\Big(
    r_{i,t}(\theta)\,\hat{A}_{i,t},\;
    \mathrm{clip}(r_{i,t}(\theta),\, \\
    1-\epsilon_l,\, 1+\epsilon_h)\,\hat{A}_{i,t}
    \Big),
\end{split}
\end{equation}
and the advantage estimate $\hat{A}_{i,t}$ retains the same definition as in Eq.~\eqref{eq:grpo_advantage}, computed with the DAPO reward $R_i$ from Eq.~\eqref{eq:dapo_reward}.

\section{Theory}\label{sec:theory}
\begin{figure*}[!t]
    \centering
    \begin{minipage}[t]{0.49\textwidth}
        \centering
        \includegraphics[width=\linewidth]{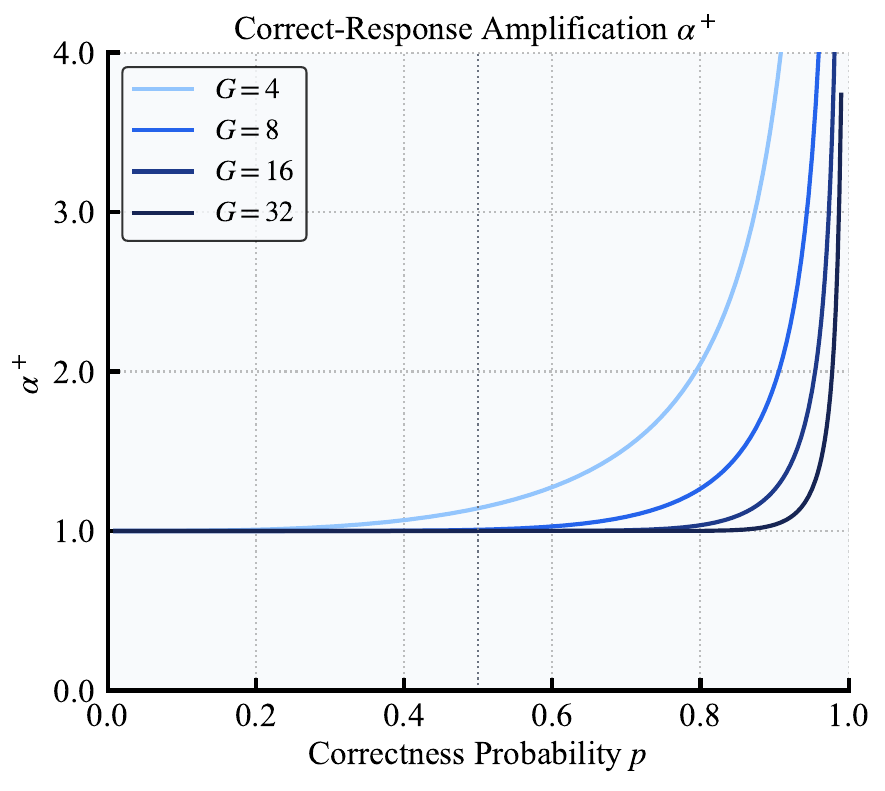}
        \subcaption{$\alpha^{+}$ as a function of $p$ for $G\in\{4,8,16,32\}$.}
        \label{fig:amp_A}
    \end{minipage}
    \hfill
    \begin{minipage}[t]{0.49\textwidth}
        \centering
        \includegraphics[width=\linewidth]{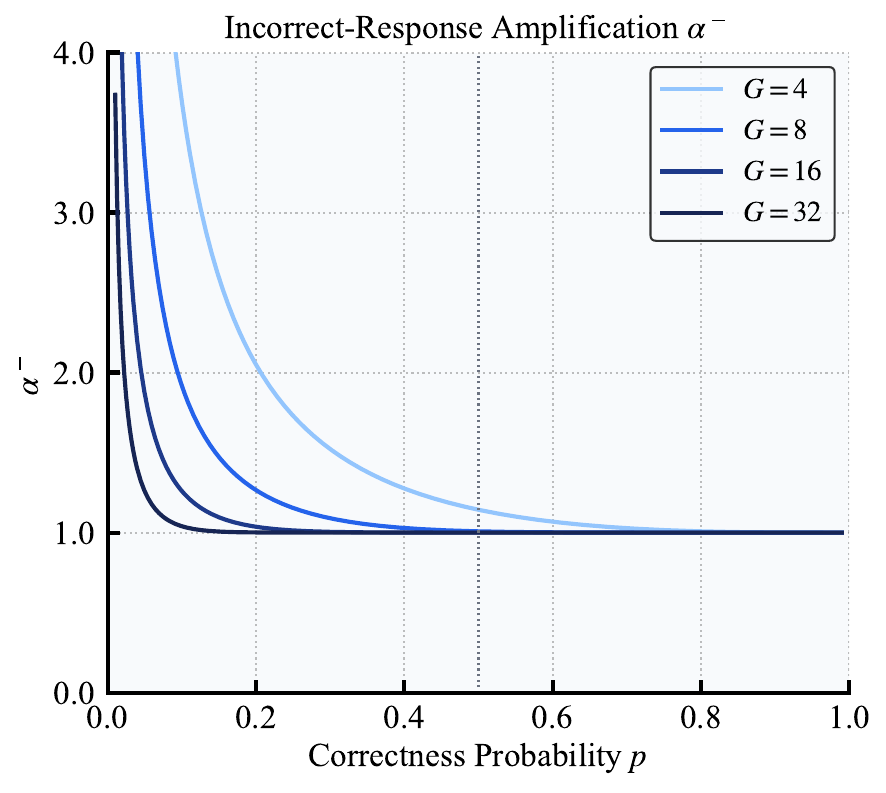}
        \subcaption{$\alpha^{-}$ as a function of $p$ for $G\in\{4,8,16,32\}$.}
        \label{fig:amp_B}
    \end{minipage}
    \caption{
        Asymmetric amplification factors $\alpha^{+}$ and $\alpha^{-}$ under Dynamic Sampling
        (Theorem~\ref{thm:asym_amp}).
    }
    \label{fig:amplification}
\vspace{-0.1cm}
\end{figure*}

\begin{figure}[!t]
    \centering
    \begin{minipage}[t]{0.49\textwidth}
        \centering
        \includegraphics[width=\linewidth]{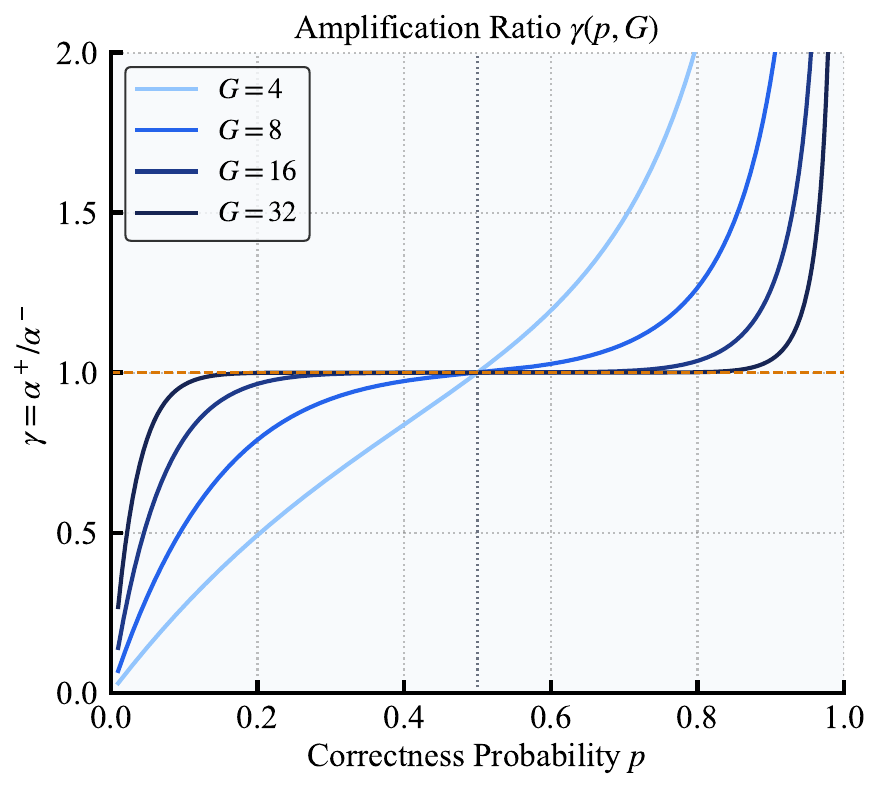}
        
        \caption{
            Ratio $\gamma=\alpha^{+}/\alpha^{-}$ satisfies $\gamma<1$ across the entire shaded region ($p<0.5$), regardless of group size $G$
        (Corollary~\ref{cor:hard_prompts}).
        }
    \label{fig:amp_C}
    \end{minipage}
\vspace{-0.2cm}
\end{figure}
As presented in the experimental results of the original DAPO paper, among the four improvements, Dynamic Sampling contributes the most to DAPO's accuracy gain over GRPO. However, in this section, we demonstrate that while Dynamic Sampling enhances accuracy by improving training stability, it reduces training efficiency. Specifically, Dynamic Sampling induces an asymmetric amplification of advantages (\cref{thm:asym_amp}), leading the model to avoid generating observed incorrect responses rather than strengthening the hard-to-sample correct ones on hard prompts (Corollary \ref{cor:hard_prompts}), resulting in low training efficiency.

Now, we demonstrate the details of the asymmetric amplification. Precisely, after applying Dynamic Sampling, the expected advantages of the correct and incorrect responses are multiplied by the coefficients $\alpha^{+}$ and $\alpha^{-}$, respectively.

\begin{theorem}[Proof is in Appendix~\ref{app:proof_thm1}]
\label{thm:asym_amp}
For any prompt $x$ with $p\in(0,1)$ and group size $G\ge2$, under $0 < \hat{p} < 1$:

\begin{equation}\label{eq:amp_correct}
\begin{split}
&\mathbb{E}[\hat{A}_i \mid 0 < \hat{p} < 1,\, R_{\mathrm{acc}}(o_i){=}1] \\
&\quad= \alpha^{+}\cdot\mathbb{E}[\hat{A}_i \mid R_{\mathrm{acc}}(o_i){=}1],
\end{split}
\end{equation}
\begin{equation}\label{eq:amp_incorrect}
\begin{split}
&\mathbb{E}[\hat{A}_i \mid 0 < \hat{p} < 1,\, R_{\mathrm{acc}}(o_i){=}{-1}] \\
&\quad= \alpha^{-}\cdot\mathbb{E}[\hat{A}_i \mid R_{\mathrm{acc}}(o_i){=}{-1}].
\end{split}
\end{equation}
where the amplification factors are
\begin{equation*}
\alpha^{+} := \frac{1}{1-p^{G-1}}, \qquad \alpha^{-} := \frac{1}{1-(1-p)^{G-1}}.
\end{equation*}
\end{theorem}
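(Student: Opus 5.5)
The plan is to use the law of total expectation, conditioning on whether the other $G-1$ responses in the group agree with $o_i$. Throughout I take the responses $o_1,\dots,o_G$ to be i.i.d.\ draws from $\pi_\theta(\cdot\mid x)$. I also take the reward entering $\hat{A}_i$ to be the accuracy reward $R_{\mathrm{acc}}\in\{1,-1\}$, which is the setting of the statement. For the degenerate case where all rewards in the group coincide, I adopt the convention already used in \Cref{sec:preliminary_dapo}: there $\mathrm{std}(\{R_j\})=0$ and the advantage is defined to be zero, i.e.\ $\hat{A}_i=0$. I will state this convention explicitly at the start, since the whole argument rests on it.

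\textbf{Correct responses.} Fix $i$ and condition on $R_{\mathrm{acc}}(o_i)=1$. Given this, $\hat{p}>0$ holds automatically. Hence the event $\{0<\hat{p}<1\}$ coincides with the event $E^{+}$ that at least one of the other $G-1$ responses is incorrect. Its complement $\overline{E^{+}}$ is the event that all other responses are correct. By independence of $o_i$ from the remaining responses, $\Pr(\overline{E^{+}}\mid R_{\mathrm{acc}}(o_i)=1)=p^{G-1}$. On $\overline{E^{+}}$ every reward equals $1$, so $\hat{A}_i=0$. Total expectation then gives
\begin{equation*}
\mathbb{E}[\hat{A}_i\mid R_{\mathrm{acc}}(o_i)=1] = (1-p^{G-1})\,\mathbb{E}[\hat{A}_i\mid E^{+},\,R_{\mathrm{acc}}(o_i)=1] + p^{G-1}\cdot 0 .
\end{equation*}
Because $p\in(0,1)$ and $G\ge 2$, we have $1-p^{G-1}>0$. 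Dividing by this factor yields \eqref{eq:amp_correct} with $\alpha^{+}=1/(1-p^{G-1})$.

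\textbf{Incorrect responses.} The argument for $R_{\mathrm{acc}}(o_i)=-1$ is symmetric. Now $\hat{p}<1$ holds automatically, and $\{0<\hat{p}<1\}$ reduces to the event that some other response is correct. Its complement has conditional probability $(1-p)^{G-1}$, and on it again $\hat{A}_i=0$. The same division gives \eqref{eq:amp_incorrect} with $\alpha^{-}=1/(1-(1-p)^{G-1})$.

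\textbf{Main obstacle.} The only delicate step is the reduction of the conditioning event, namely justifying that, given $R_{\mathrm{acc}}(o_i)$, the filter $0<\hat{p}<1$ depends only on the other $G-1$ responses. This needs two things. First, independence within the group, so that the complement probability factorizes as $p^{G-1}$ or $(1-p)^{G-1}$. Second, the zero-advantage convention on uniform groups. Two points deserve a brief remark. If the overlong reward $R_{\mathrm{len}}$ were included, uniform-accuracy groups might not have exactly zero advantage; I will therefore state that the theorem concerns the accuracy reward, as its formulation indicates. Also, the identities hold trivially even when the unconditional expectation is zero, so no nondegeneracy assumption is needed. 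The rest is routine.
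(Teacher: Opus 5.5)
Your proposal is correct and follows the paper's proof in Appendix~\ref{app:proof_thm1} essentially step for step: condition on $R_{\mathrm{acc}}(o_i)$, partition according to whether the other $G-1$ responses all agree with $o_i$ (probability $p^{G-1}$ or $(1-p)^{G-1}$ by independence), take the advantage on the uniform group to be zero, apply total expectation, and divide by the positive factor. Your explicit caveat is worth keeping. The zero-advantage step needs identical rewards within the group, so it requires the pure accuracy reward; with the overlong term $R_{\mathrm{len}}$ included, uniform accuracy no longer forces identical rewards. The paper's proof leaves this assumption implicit.
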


\begin{remark}
    The factors $\alpha^{+}$ and $\alpha^{-}$ amplify the conditional expected advantage for correct and incorrect responses, respectively. The amplification factors reveal a striking asymmetry as prompt difficulty varies. \Cref{fig:amplification} visualize two amplification factors across the full $(p,G)$ parameter space. As shown in \Cref{fig:amp_A} and \Cref{fig:amp_B}, $\alpha^{+}$ remains close to~$1$ throughout the hard-prompt regime ($p\ll 0.5$) while $\alpha^{-}$ grows large precisely there, indicating that Dynamic Sampling disproportionately amplifies the penalty signal for incorrect responses. 
\end{remark}

\begin{corollary}
\label{cor:hard_prompts}
Let $\gamma(p,G):=\alpha^{+}/\alpha^{-}$. Then:
$$\gamma(p,G)<1,\quad\text{if }p<0.5;$$
$$\gamma(p,G)>1,\quad\text{if }p>0.5;$$
$$\gamma(p,G)=1,\quad\text{if and only if }p=0.5.$$
\end{corollary}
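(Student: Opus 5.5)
We take the closed forms of $\alpha^{+}$ and $\alpha^{-}$ from \cref{thm:asym_amp} as given, so the corollary reduces to an elementary comparison of two explicit functions of $p$. For $p\in(0,1)$ and $G\ge 2$, both $p^{G-1}$ and $(1-p)^{G-1}$ lie in $(0,1)$. Hence both denominators $1-p^{G-1}$ and $1-(1-p)^{G-1}$ are strictly positive, and both factors are well defined and positive. Writing out the ratio gives
\begin{equation*}
\gamma(p,G)=\frac{\alpha^{+}}{\alpha^{-}}=\frac{1-(1-p)^{G-1}}{1-p^{G-1}} .
\end{equation*}
Because the denominator is positive, the sign of $\gamma(p,G)-1$ equals the sign of the numerator difference:
\begin{equation*}
\bigl(1-(1-p)^{G-1}\bigr)-\bigl(1-p^{G-1}\bigr)=p^{G-1}-(1-p)^{G-1}.
\end{equation*}

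The key step is then the strict monotonicity of $t\mapsto t^{G-1}$ on $[0,\infty)$. This holds because $G-1\ge 1$, so the function is strictly increasing on nonnegative reals. We split into three cases.
\begin{itemize}
\item If $p<0.5$, then $p<1-p$, so $p^{G-1}<(1-p)^{G-1}$, the difference is negative, and $\gamma<1$.
\item If $p>0.5$, the inequality reverses and $\gamma>1$.
\item If $p=0.5$, the two terms coincide and $\gamma=1$.
\end{itemize}

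The ``if and only if'' clause needs no separate argument. The three cases exhaust $(0,1)$, and the first two give strict inequalities, so $\gamma=1$ forces $p=0.5$. Equivalently, strict monotonicity makes $t\mapsto t^{G-1}$ injective on $[0,\infty)$, so $p^{G-1}=(1-p)^{G-1}$ implies $p=1-p$.

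No step is a genuine obstacle. The only points that need care are the positivity of the denominators, which is where the hypotheses $p\in(0,1)$ and $G\ge2$ are used, and the observation that dividing by a positive quantity preserves the direction of the inequality.
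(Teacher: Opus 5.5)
Your proof is correct and follows essentially the same route as the paper: both arguments reduce the claim to comparing $p^{G-1}$ with $(1-p)^{G-1}$, using $p$ versus $1-p$ and the strict monotonicity of $t\mapsto t^{G-1}$. The only cosmetic difference is that you read off the sign of $\gamma-1$ from the explicit ratio, whereas the paper compares the denominators of $\alpha^{+}$ and $\alpha^{-}$ directly; your explicit check that both denominators are positive (using $p\in(0,1)$ and $G\ge 2$) is a small piece of extra care that the paper leaves implicit.
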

This corollary follows directly from \cref{thm:asym_amp}: for $p<0.5$ we have $p<1-p$, so $p^{G-1}<(1-p)^{G-1}$, giving $\alpha^{+}<\alpha^{-}$ and hence $\gamma<1$. Symmetrically, for $p>0.5$ we have $p>1-p$, giving $\alpha^{+}>\alpha^{-}$ and hence $\gamma>1$. The equality $\gamma=1$ holds if and only if $p=1-p=0.5$.

\begin{remark}
\label{rem:numerical}
\Cref{fig:amp_C} confirms Corollary~\ref{cor:hard_prompts}: the ratio $\gamma=\alpha^{+}/\alpha^{-}$ falls below~$1$ for all $p<0.5$ and all group sizes.
Specially, for $G=8$ and $p=0.1$: $\alpha^{+}\!\approx\!1.00$, $\alpha^{-}\!\approx\!1.92$, $\gamma\!\approx\!0.52$. For $G=16$ and $p=0.1$: $\alpha^{+}\!\approx\!1.00$, $\alpha^{-}\!\approx\!1.26$, $\gamma\!\approx\!0.79$. In both cases the penalty on incorrect responses is amplified substantially more than the reward for correct ones, and a smaller group size intensifies this asymmetry. 
\end{remark}

Corollary~\ref{cor:hard_prompts} reveals that for hard prompts ($p < 0.5$), the ratio $\gamma = \alpha^{+}/\alpha^{-}$ is strictly less than~$1$. As $p \to 0$, this asymmetry becomes extreme: $\alpha^{-}$ grows unboundedly, while $\alpha^{+}$ remains close to~$1$, implying Dynamic Sampling asymmetrically amplifies the advantage of incorrect responses more than that of correct ones. Consequently, rather than capitalizing on the hard-to-sample correct responses on hard prompts, the model learns to avoid generating the observed incorrect responses, without strengthening the correct responses, thereby reducing training efficiency.

\section{Algorithm}\label{sec:method}
\begin{algorithm*}[t]
\caption{DA3PO Training}
\label{alg:ha3po}
\begin{algorithmic}[1]
\REQUIRE Policy $\pi_\theta$; prompt distribution $\mathcal{D}$; group size $G$; clipping bounds $\epsilon_l,\epsilon_h$; amplification factor $\lambda>1$; difficulty threshold $\tau$
\FOR{each training step}
  \STATE Sample prompts $\{x_j\}$ from $\mathcal{D}$
  \FOR{each prompt $x_j$}
    \STATE Sample $G$ responses $\{o_1,\ldots,o_G\}\sim\pi_{\theta_\text{old}}(\cdot|x_j)$
    \STATE Compute accuracy rewards $\{R_{\mathrm{acc}}(o_1),\ldots,R_{\mathrm{acc}}(o_G)\}$ then final rewards $\{R_1,\ldots,R_G\}$
    \STATE Compute empirical positive rate $\hat{p} \leftarrow \frac{1}{G}\sum_{i=1}^{G} \mathbb{I}\!\left[R_{\mathrm{acc}}(o_i) > 0\right]$
    \IF{$\hat{p}=0$ \textbf{or} $\hat{p}=1$}
      \STATE filter and resample \hfill{\textit{(Dynamic Sampling)}}
    \ENDIF
    \STATE Compute advantages $\hat{A}_{i,t}$ via Eq. (\ref{eq:grpo_advantage})
    \FOR{each response $o_i$}
      \IF{$R_{\mathrm{acc}}(o_i) > 0$ \textbf{and} $\hat{p} < \tau$}
        \STATE $\tilde{A}_{i,t}\leftarrow \lambda\cdot\hat{A}_{i,t}$ \hfill{$\triangleright$ \textit{DAA}}
      \ELSE
        \STATE $\tilde{A}_{i,t}\leftarrow\hat{A}_{i,t}$
      \ENDIF
    \ENDFOR
  \ENDFOR
  \STATE Compute per-token loss:
  \STATE \quad $\mathcal{L}_{i,t}(\theta) \leftarrow\min\!\bigl(r_{i,t}\,\tilde{A}_{i,t},\,\bar{r}_{i,t}\,\tilde{A}_{i,t}\bigr)$
  \STATE Update $\theta$ by maximising $\mathcal{J}_{\text{DA3PO}}(\theta)$ via Eq.~\eqref{eq:da3po}
\ENDFOR
\end{algorithmic}
\end{algorithm*}

As mentioned in Section~\ref{sec:theory}, Dynamic Sampling induces an asymmetric amplification. It causes the model to learn to avoid generating observed incorrect responses, rather than to capitalize on the hard-to-sample correct ones on hard prompts,  which reduces training efficiency. To address this issue, we propose Direct Advantage Amplification (DAA). Its key insight is to amplify the advantage of hard-to-sample correct responses on hard prompts obtained by Dynamic Sampling by a positive constant. This allows such correct responses to be capitalized on more effectively, leading to higher training efficiency. Then, by integrating DAA into DAPO, we obtain \textit{Difficulty-aware Advantage Amplification Policy Optimization} (DA3PO). Its detailed procedure is presented in \cref{alg:ha3po}. More surprisingly, since the implementation of DAA is straightforward, DA3PO requires fewer than 30 lines of code derived from DAPO.

\subsection{Details}
\label{sec:details}

From \cref{thm:asym_amp}, Dynamic Sampling applies asymmetric amplification factors $\alpha^{+}$ and $\alpha^{-}$ to the advantage of correct and incorrect responses, respectively. As shown in the subsequent remark of \cref{thm:asym_amp}, $\alpha^{+}$ remains close to~$1$ on very hard prompts ($p\ll0.5$) while $\alpha^{-}$ is much larger than $\alpha^{+}$. The asymmetric amplification finally reduces training efficiency on hard prompts. 

To address this problem, we can scale the advantage of hard-to-sample correct responses on hard prompts obtained by Dynamic Sampling by a factor of $1/\gamma$. However, $p$ varies as the policy updates during training, making the true value of $\gamma$ difficult to track, so we use a substituting way to improve the training efficiency. We propose Direct Advantage Amplification (DAA), which adopts a constant $\lambda > 1$ to amplify the advantage of hard-to-sample correct responses on hard prompts obtained by Dynamic Sampling.

This design preserves the amplification effect on hard prompts without requiring estimation of the prompt difficulty $p$, while introducing only minimal modifications over DAPO. The calculation formula of DAA is:
\begin{equation}
\label{eq:amplified_advantage}
\tilde{A}_{i,t} =
\begin{cases}
\lambda \cdot \hat{A}_{i,t}, & \text{if } R_i>0 \text{ and } \hat{p}<\tau, \\[3pt]
\hat{A}_{i,t}, & \text{otherwise},
\end{cases}
\end{equation}
where $\lambda>1$ is a constant amplification factor and $\tau\in(0,1)$ a difficulty threshold. DAA is activated only when the response is correct ($R_i>0$) and the prompt is hard ($\hat{p}<\tau$) 
to improve the training efficiency on hard prompts.

Based on experiments with various values of $\lambda$, we set $\lambda = 2$ (see Section~\ref{sec:ablation} for more details).

By integrating DAA into the DAPO objective, we get DA3PO, whose objective is formulated as:

\begin{equation}
\small
\begin{split}
    \mathcal{J}_{\mathrm{DA3PO}}(\theta) = \mathbb{E}_{x \sim \mathcal{D},\; \{o_i\}_{i=1}^G \sim \pi_{\theta_{\mathrm{old}}}(\cdot \mid x)}
    \Bigg[ \frac{1}{\sum_{i=1}^{G}|o_i|} \\
    \sum_{i=1}^{G} \sum_{t=1}^{|o_i|} \mathcal{L}_{i,t}(\theta) \Bigg],
    \quad \text{s.t.} \quad 0 < \hat{p} < 1,
\label{eq:da3po}
\end{split}
\end{equation}
with the token-level loss updated to use the DAA advantage $\tilde{A}_{i,t}$:
\begin{equation}
\begin{split}
    \mathcal{L}_{i,t}(\theta) = \min\!\Big(
    r_{i,t}(\theta)\,\tilde{A}_{i,t},\;
    \mathrm{clip}\!\big(r_{i,t}(\theta),\, \\
    1-\epsilon_l,\, 1+\epsilon_h\big)\,\tilde{A}_{i,t}
    \Big).
\label{eq:per_token_loss}
\end{split}
\end{equation}

Algorithm~\ref{alg:ha3po} summarises the full DA3PO training procedure. Lines marked with $\triangleright$ are the only additions relative to standard DAPO. 

\subsection{Discussion}

To improve the training efficiency of Dynamic Sampling, our DAA amplifies the advantage of hard-to-sample correct responses on hard prompts, as obtained by Dynamic Sampling. In addition, since the implementation of DAA is straightforward, DA3PO requires fewer than 30 lines of code derived from DAPO. Moreover, although we only apply DAA to DAPO, DAA can be applied to GRPO and its variants. This is because we only modify Dynamic Sampling, ensuring compatibility with any other enhancements made to GRPO and its variants. Lastly, our DAA still has significant room for improvement. For example, the constant $\lambda$ and $\tau$ used in Eq. \eqref{eq:amplified_advantage} can be replaced by adaptive factors. We leave such extensions to future work.

\section{Experiments}\label{sec:Experiments}
\subsection{Experimental Setup}
\label{sec:setup}

\paragraph{Models and Training Data.}
We evaluate our DA3PO on two base models of different scales: Qwen3-4B-Base and Qwen3-8B-Base. Following DAPO~\cite{dapo}, we use the DAPO-Math-17K dataset for training, which consists of 17K mathematical problems with verifiable answers.
\vspace{-5pt}
\paragraph{Benchmarks and baselines.}
To evaluate our algorithm, we conduct experiments on seven mathematical benchmarks: AIME24/25/26, AMC, Minerva, Olympiad~\cite{olympiadbench} and MATH~\cite{math}. We compare DA3PO against the base model and three reinforcement learning baselines: GRPO, DAPO, and GSPO.
\vspace{-15pt}
\paragraph{Training.}
We set the rollout group size $G = 16$, training batch size to 512, learning rate to $1 \times 10^{-6}$. The maximum prompt and response lengths are set to 2K and 4K tokens, respectively. As mentioned in Section~\ref{sec:method}, we fix $\tau = 0.5$ for DA3PO. We also use $\lambda = 2$ on both model scales and we discuss the choice of $\lambda$ in Section~\ref{sec:ablation}.
\vspace{-5pt}
\paragraph{Evaluation.}
All algorithms are trained for 200 steps, by which accuracy has stabilized. We select the best-performing checkpoint for each algorithm and report its accuracy. During evaluation, we use temperature as 1 and top-p as 0.95. For the AIME and AMC benchmarks, we sample 32 responses per problem; for MATH, Olympiad, and Minerva, we sample 8 responses per problem. 

\subsection{Main Results}
\label{sec:main-results}

\begin{table*}[t]
\centering
\small
\setlength{\tabcolsep}{4pt}
\renewcommand{\arraystretch}{1.15}
\begin{tabular}{cl *{8}{c}}
\toprule
\multirow{2}{*}{\textbf{Model}} & \multirow{2}{*}{\textbf{Algorithm}}
& \textbf{AIME24} & \textbf{AIME25} & \textbf{AIME26}
& \textbf{AMC}
& \textbf{Minerva} & \textbf{Olympiad}
& \textbf{MATH}
& \multirow{2}{*}{\textbf{AVG}} \\
& & \textbf{avg@32} & \textbf{avg@32} & \textbf{avg@32} & \textbf{avg@32} & \textbf{avg@8} & \textbf{avg@8} & \textbf{avg@8} & \\
\midrule
\multirow{8}{*}{\textbf{Qwen3-4B-Base}}
& Base Model         &  5.21 &  3.23 &  3.65 & 20.18 & 12.22 & 19.56 & 41.23 & 15.04 \\
& GRPO               & 15.31 & 12.81 & 11.56 & 55.50 & 23.99 & 39.35 & 77.22 & 33.68 \\
& DAPO               & 18.96 & 15.10 & 12.71 & 52.07 & 24.17 & 40.65 & 78.20 & 34.55 \\
& GSPO               & 18.85 & 13.17 & 14.27 & 51.51 & 22.61 & 41.96 & 79.87 & 34.61 \\
& \textbf{DA3PO}     & \textbf{23.96} & \textbf{19.17} & \textbf{17.19} & \textbf{60.50} & \textbf{26.84} & \textbf{44.52} & \textbf{85.20} & \textbf{39.63} \\
\cmidrule(l){2-10}
& \textit{vs.\ GRPO} & \impgain{8.65} & \impgain{6.36} & \impgain{5.63} & \impgain{5.00} & \impgain{2.85} & \impgain{5.17} & \impgain{7.98} & \impgain{5.95} \\
& \textit{vs.\ DAPO} & \impgain{5.00} & \impgain{4.07} & \impgain{4.48} & \impgain{8.43} & \impgain{2.67} & \impgain{3.87} & \impgain{7.00} & \impgain{5.08} \\
& \textit{vs.\ GSPO} & \impgain{5.11} & \impgain{6.00} & \impgain{2.92} & \impgain{8.99} & \impgain{4.23} & \impgain{2.56} & \impgain{5.33} & \impgain{5.02} \\
\midrule
\multirow{8}{*}{\textbf{Qwen3-8B-Base}}
& Base Model         &  5.52 &  5.73 &  4.69 & 26.13 & 14.71 & 26.78 & 59.62 & 20.45 \\
& GRPO               & 18.54 & 17.29 & 16.77 & 61.11 & 24.77 & 42.74 & 82.95 & 37.74 \\
& DAPO               & 24.58 & 20.52 & 17.19 & 64.83 & 29.50 & 48.30 & 85.52 & 41.49 \\
& GSPO               & 24.27 & 21.56 & 17.92 & 65.44 & 30.06 & 48.83 & 85.52 & 41.94 \\
& \textbf{DA3PO}     & \textbf{30.52} & \textbf{23.23} & \textbf{24.17} & \textbf{69.69} & \textbf{37.91} & \textbf{54.26} & \textbf{88.45} & \textbf{46.89} \\
\cmidrule(l){2-10}
& \textit{vs.\ GRPO} & \impgain{11.98} & \impgain{5.94} & \impgain{7.40} & \impgain{8.58} & \impgain{13.14} & \impgain{11.52} & \impgain{5.50} & \impgain{9.15} \\
& \textit{vs.\ DAPO} & \impgain{5.94} & \impgain{2.71} & \impgain{6.98} & \impgain{4.86} & \impgain{8.41} & \impgain{5.96} & \impgain{2.93} & \impgain{5.40} \\
& \textit{vs.\ GSPO} & \impgain{6.25} & \impgain{1.67} & \impgain{6.25} & \impgain{4.25} & \impgain{7.85} & \impgain{5.43} & \impgain{2.93} & \impgain{4.95} \\
\bottomrule
\end{tabular}
\caption{Avg@k (\%) of all algorithms across seven mathematical reasoning benchmarks, with absolute improvement of DA3PO over each baseline. We report avg@32 for AIME and AMC, and avg@8 for Minerva, Olympiad and MATH. Best results within each benchmark are \textbf{bolded}.}
\label{tab:main}
\vspace{-0.3cm}
\end{table*}

\begin{table*}[t]
\centering
\small
\setlength{\tabcolsep}{4pt}
\renewcommand{\arraystretch}{1.15}
\begin{tabular}{cl *{8}{c}}
\toprule
\multirow{2}{*}{\textbf{Model}} & \multirow{2}{*}{\textbf{Algorithm}}
& \textbf{AIME24} & \textbf{AIME25} & \textbf{AIME26}
& \textbf{AMC}
& \textbf{Minerva} & \textbf{Olympiad}
& \textbf{MATH}
& \multirow{2}{*}{\textbf{AVG}} \\
& & \textbf{pass@32} & \textbf{pass@32} & \textbf{pass@32} & \textbf{pass@32} & \textbf{pass@8} & \textbf{pass@8} & \textbf{pass@8} & \\
\midrule
\multirow{8}{*}{\textbf{Qwen3-4B-Base}}
& Base Model         & 30.00 & 23.33 & 16.67 & 78.31 & 28.31 & 42.67 & 77.60 & 42.41 \\
& GRPO               & 36.67 & \textbf{46.67} & 30.00 & 86.75 & 36.03 & 56.00 & 90.20 & 54.62 \\
& DAPO               & 43.33 & 43.33 & 30.00 & 84.34 & 39.71 & 59.56 & 91.00 & 55.90 \\
& GSPO               & 50.00 & 40.00 & 30.00 & \textbf{91.57} & 35.66 & 57.33 & 89.20 & 56.25 \\
& \textbf{DA3PO}     & \textbf{60.00} & \textbf{46.67} & \textbf{40.00} & \textbf{91.57} & \textbf{41.18} & \textbf{61.19} & \textbf{93.80} & \textbf{62.06} \\
\cmidrule(l){2-10}
& \textit{vs.\ GRPO} & \impgain{23.33} & 0.00 & \impgain{10.00} & \impgain{4.82} & \impgain{5.15} & \impgain{5.19} & \impgain{3.60} & \impgain{7.44} \\
& \textit{vs.\ DAPO} & \impgain{16.67} & \impgain{3.34} & \impgain{10.00} & \impgain{7.23} & \impgain{1.47} & \impgain{1.63} & \impgain{2.80} & \impgain{6.16} \\
& \textit{vs.\ GSPO} & \impgain{10.00} & \impgain{6.67} & \impgain{10.00} & 0.00 & \impgain{5.52} & \impgain{3.86} & \impgain{4.60} & \impgain{5.81} \\
\midrule
\multirow{8}{*}{\textbf{Qwen3-8B-Base}}
& Base Model         & 23.33 & 33.33 & 23.33 & 83.13 & 31.62 & 46.81 & 85.80 & 46.76 \\
& GRPO               & 43.33 & 40.00 & 43.33 & 90.36 & 38.24 & 61.63 & 93.20 & 58.58 \\
& DAPO               & 56.67 & 46.67 & 46.67 & 90.36 & 43.01 & 63.70 & 93.40 & 62.93 \\
& GSPO               & 50.00 & 40.00 & 46.67 & 89.16 & 45.59 & \textbf{71.41} & \textbf{95.80} & 62.66 \\
& \textbf{DA3PO}     & \textbf{70.00} & \textbf{50.00} & \textbf{53.33} & \textbf{92.77} & \textbf{49.26} & \textbf{71.41} & 94.80 & \textbf{68.80} \\
\cmidrule(l){2-10}
& \textit{vs.\ GRPO} & \impgain{26.67} & \impgain{10.00} & \impgain{10.00} & \impgain{2.41} & \impgain{11.02} & \impgain{9.78} & \impgain{1.60} & \impgain{10.22} \\
& \textit{vs.\ DAPO} & \impgain{13.33} & \impgain{3.33} & \impgain{6.66} & \impgain{2.41} & \impgain{6.25} & \impgain{7.71} & \impgain{1.40} & \impgain{5.87} \\
& \textit{vs.\ GSPO} & \impgain{20.00} & \impgain{10.00} & \impgain{6.66} & \impgain{3.61} & \impgain{3.67} & 0.00 & \imploss{1.00} & \impgain{6.14} \\
\bottomrule
\end{tabular}
\caption{Pass@k (\%) of all algorithms across seven mathematical reasoning benchmarks, with absolute improvement of DA3PO over each baseline. We report pass@32 for AIME and AMC, and pass@8 for Minerva, Olympiad and MATH. Best results within each benchmark are \textbf{bolded}.}
\label{tab:passk}
\vspace{-0.2cm}
\end{table*}

\begin{figure*}[!t]
    \centering
    \begin{minipage}[t]{0.45\linewidth}
        \centering
        \includegraphics[width=\linewidth]{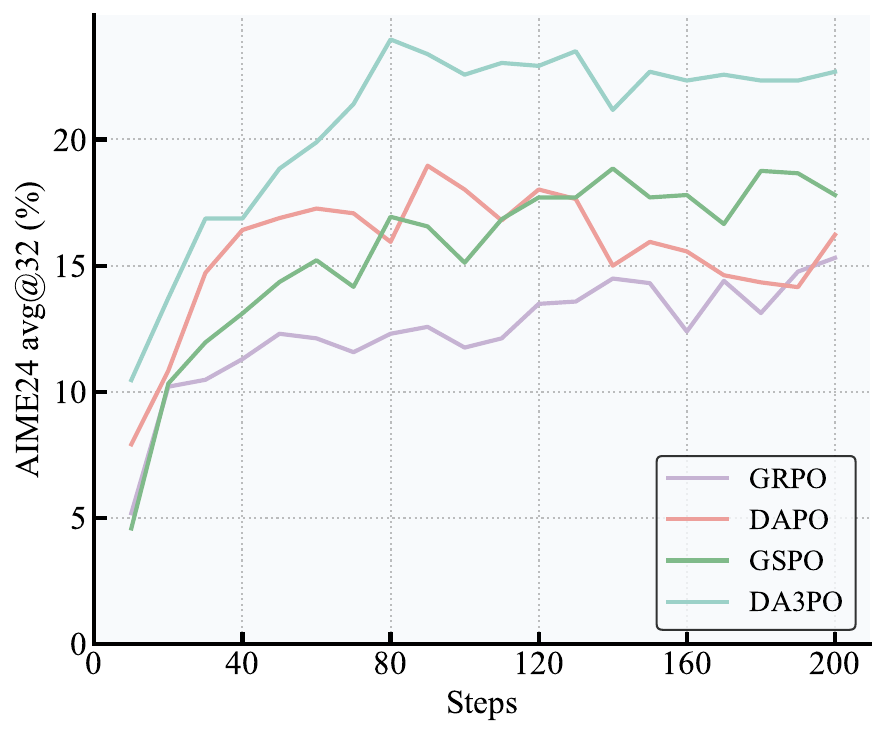}
        \vspace{-0.8cm}
        \subcaption{Avg@32 of AIME24 in training on Qwen3-4B-Base.}
        \label{fig:val_4B}
    \end{minipage}
    \hfill
    \begin{minipage}[t]{0.45\linewidth}
        \centering
        \includegraphics[width=\linewidth]{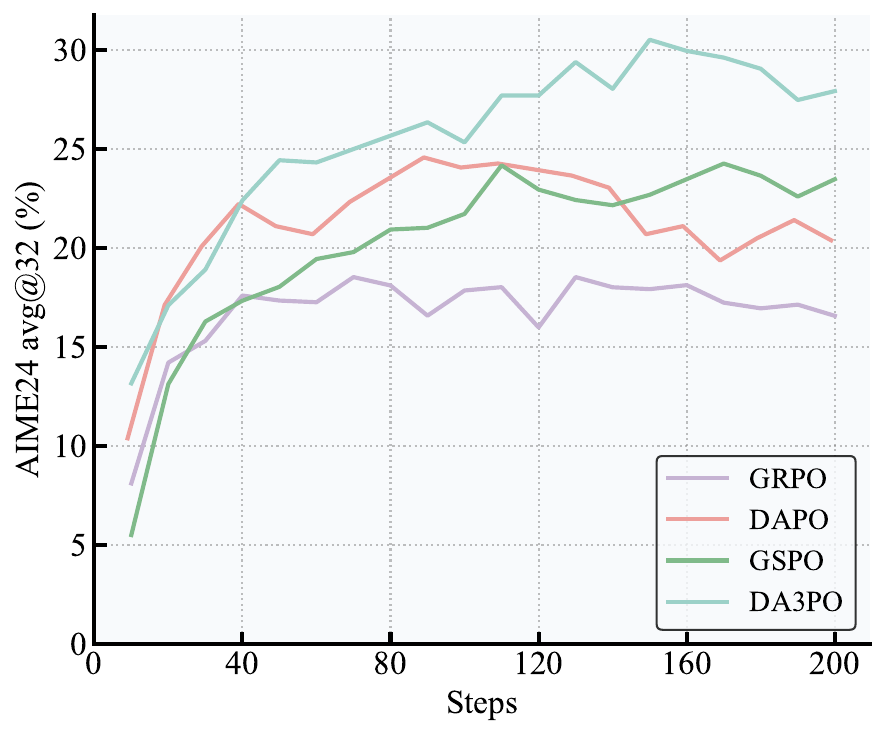}
        \vspace{-0.8cm}
        \subcaption{Avg@32 of AIME24 in training on Qwen3-8B-Base.}
        \label{fig:val_8B}
    \end{minipage}
    \caption{
        Avg@32 of AIME24 over the course of RL training for all algorithms. Due to constraints on computational resources, we only use AIME24 as the validation benchmark.
    }
    \label{fig:val_acc}
\vspace{-0.5cm}
\end{figure*}

\begin{table*}[t]
\centering
\small
\setlength{\tabcolsep}{4pt}
\renewcommand{\arraystretch}{1.15}
\begin{tabular}{cc *{8}{>{\centering\arraybackslash}p{1.2cm}}}
\toprule
\multirow{2}{*}{\textbf{Model}} & \multirow{2}{*}{\textbf{Algorithm}}
& \textbf{AIME24} & \textbf{AIME25} & \textbf{AIME26}
& \textbf{AMC}
& \textbf{Minerva} & \textbf{Olympiad}
& \textbf{MATH}
& \multirow{2}{*}{\textbf{AVG}} \\
& & \textbf{avg@32} & \textbf{avg@32} & \textbf{avg@32} & \textbf{avg@32} & \textbf{avg@8} & \textbf{avg@8} & \textbf{avg@8} & \\
\midrule
\multirow{4}{*}{\textbf{Qwen3-4B-Base}}
& $\lambda\!=\!1.0$  & 18.96 & 15.10 & 12.71 & 52.07 & 24.17 & 40.65 & 78.20 & 34.55 \\
& $\lambda\!=\!1.5$  & 22.08 & \textbf{19.17} & 15.94 & 60.35 & \textbf{26.98} & 44.50 & 83.80 & 38.97 \\
& $\lambda\!=\!2.0$  & \textbf{23.96} & \textbf{19.17} & \textbf{17.19} & \textbf{60.50} & 26.84 & \textbf{44.52} & \textbf{85.20} & \textbf{39.63} \\
& $\lambda\!=\!2.5$  & 23.02 & 15.62 & 13.33 & 58.32 & 25.64 & 39.91 & 78.33 & 36.31 \\
\midrule
\multirow{4}{*}{\textbf{Qwen3-8B-Base}}
& $\lambda\!=\!1.0$  & 24.58 & 20.52 & 17.19 & 64.83 & 29.50 & 48.30 & 85.52 & 41.49 \\
& $\lambda\!=\!1.5$  & 28.65 & 21.56 & 21.35 & 66.72 & 35.57 & 53.19 & 87.02 & 44.87 \\
& $\lambda\!=\!2.0$  & \textbf{30.52} & \textbf{23.23} & 24.17 & 69.69 & \textbf{37.91} & \textbf{54.26} & \textbf{88.45} & \textbf{46.89} \\
& $\lambda\!=\!2.5$  & 29.69 & 21.04 & \textbf{24.69} & \textbf{69.95} & 30.24 & 48.94 & 86.85 & 44.49 \\
\bottomrule
\end{tabular}
\caption{Ablation on the amplification factor $\lambda$. We report avg@32 for AIME and AMC, and avg@8 for Minerva, Olympiad and MATH. $\lambda\!=\!1.0$ is DAPO as a reference. Best results within each benchmark are \textbf{bolded}.}
\label{tab:ablation}
\vspace{-0.5cm}
\end{table*}

\findingbox{DA3PO consistently achieves the highest average accuracy across model scales.}

To verify the effectiveness of our algorithm, we evaluate the avg@k of all algorithms on the seven benchmarks for both model scales. The experimental results are illustrated in Table~\ref{tab:main}.
Firstly, DA3PO outperforms all other GRPO variants by at least 5.02\% in average accuracy on Qwen3-4B-Base and by at least 4.95\% on Qwen3-8B-Base. 
Secondly, DA3PO achieves the best performance on the hardest benchmarks. 

Specifically, DA3PO outperforms baselines by up to 5.00\% and 6.25\% on AIME benchmarks compared to the best baseline for Qwen3-4B-Base and Qwen3-8B-Base, respectively. In fact, most of the improvements across AIME benchmarks are approximately 5\%. Lastly, on easier benchmarks such as AMC, Minerva, Olympiad, and MATH, DA3PO consistently outperforms baselines across both model scales. In particular, even on the easiest MATH benchmark, DA3PO outperforms other variants by 5.33\% on Qwen3-4B-Base and 2.93\% on Qwen3-8B-Base, indicating that DA3PO improves performance on hard prompts without sacrificing performance on easier ones.

\findingbox{DA3PO broadens the coverage of correct responses rather than collapsing onto a single solution mode.}

To verify that DA3PO does not merely raise average accuracy by collapsing onto a single solution mode, we report the pass@k of all algorithms, as shown in Table~\ref{tab:passk}.
Firstly, DA3PO outperforms baselines by at least 5.81\% in average pass@k on Qwen3-4B-Base and by at least 5.87\% on Qwen3-8B-Base. Then, DA3PO exceeds all baselines on five of seven benchmarks on both model scales. For example, DA3PO outperforms baselines by up to 10.00\% for Qwen3-4B-Base and at least 13.33\% for Qwen3-8B-Base on AIME24. On easier benchmarks except MATH, DA3PO consistently outperforms. Besides, on the easiest dataset MATH DA3PO trails by only 1.00\% on Qwen3-8B-Base, as baselines even base model already achieve pass@k above 90\% on MATH, leaving limited room for further performance gains. These results confirm that DA3PO broadens the space of correct responses across the evaluated benchmarks, rather than collapsing the policy onto a single solution mode.

\findingbox{DA3PO stably and reliably reshapes the trajectory of RL training.}
\vspace{5pt}

To verify that DA3PO brings stable performance improvement throughout the training, we plot the avg@32 of AIME24 over training on Qwen3-4B-Base and Qwen3-8B-Base, respectively. The experimental results are shown in Figure~\ref{fig:val_acc}. Interestingly, on both model scales, the DA3PO curve sits visibly above those of GRPO, DAPO, and GSPO throughout the RL training, rather than only at the best checkpoint, confirming that DA3PO stably and reliably reshapes the optimization trajectory of RL training, not an artifact of checkpoint selection or training-step variance.

\subsection{Ablation Study}
\label{sec:ablation}

\findingbox{DA3PO is robust across different values of $\lambda$, with $\lambda = 2$ yielding the best average accuracy on both model scales.}

\vspace{5pt}

To verify the robustness of DA3PO and identify the optimal value of $\lambda$, we conduct an ablation on the amplification factor $\lambda$, the key hyperparameter of DA3PO that controls the magnitude of DAA to correct responses on hard prompts. We vary $\lambda \in \{1.5, 2.0, 2.5\}$ on both model scales. Table~\ref{tab:ablation} reports the avg@k for each setting. Firstly, DA3PO consistently outperforms DAPO ($\lambda = 1.0$) across all tested values of $\lambda$ on both model scales, confirming that the amplification mechanism is robust and effective regardless of the specific choice of $\lambda$. Even  $\lambda = 1.5$ outperforms DAPO by 4.42\% on Qwen3-4B-Base and 3.38\% on Qwen3-8B-Base, confirming that even modest amplification provides a meaningful improvement. Secondly, $\lambda = 2.0$ achieves the best average accuracy on both model scales, as 39.63\% on Qwen3-4B-Base and 46.89\% on Qwen3-8B-Base. Furthermore, the degradation at $\lambda = 2.5$ is more severe on the smaller model, falling 3.32\% below $\lambda = 2.0$, while on Qwen3-8B-Base the decline is smaller and $\lambda = 2.5$ even achieves the highest score on individual benchmarks such as AIME26 and AMC. We attribute this to smaller models generating fewer correct responses on hard prompts, making them more sensitive to overly aggressive amplification, while larger models are more tolerant of higher amplification.

\section{Conclusion}\label{sec:Conclusion}

\vspace{-11pt}

In this paper, we show that Dynamic Sampling induces asymmetrical amplification, which reduces training efficiency in DAPO. To address this issue, we proposed Direct Advantage Amplification (DAA). Its key insight is to amplify the advantage of correct responses on hard prompts so that the hard-to-sample correct responses on hard prompts obtained by Dynamic Sampling can be effectively capitalized on. By integrating DAA into DAPO, we obtain Difficulty-aware Advantage Amplification Policy Optimization (DA3PO), which requires fewer than 30 lines of code derived from DAPO. Experiments on seven mathematical benchmarks demonstrate that DA3PO yields consistent accuracy enhancements across both model scales and all benchmarks, significantly outperforming GRPO and other classical GRPO variants.

\section*{Limitations}\label{sec:Limitations}

While DA3PO effectively addresses the asymmetric advantage amplification induced by Dynamic Sampling, DA3PO adopts a constant $\lambda$ and a fixed difficulty threshold $\tau$ to amplify the advantage of hard-to-sample correct responses on hard prompts. Although this design is straightforward and preserves the amplification effect on hard prompts, it may not optimally compensate the asymmetric amplification across all prompt difficulties throughout training. Therefore, replacing the constant $\lambda$ and $\tau$ with adaptive factors that dynamically adjust to the evolving prompt difficulty remains a promising direction for future work.

\bibliography{custom}

\clearpage
\newpage

\appendix

\section{Related Work}\label{sec:Related Work}

As the original group relative reinforcement learning algorithm, GRPO~\cite{grpo} has attracted considerable research attention. However, instabilities in GRPO accumulate over long-term training, preventing the policy from consistently reinforcing correct responses~\cite{dr-grpo,gpg,dapo}. To address this, several works~\cite{dr-grpo,gpg,reinforce++,gspo,dapo,dcpo,gmpo,gdpo,fipo,yuchen2026predictability} focus on improving the objective function itself to enhance training stability. Among them, DAPO~\cite{dapo} is one of the most representative works, introducing four improvements over GRPO, including Dynamic Sampling, and achieving higher accuracy than GRPO.

Building upon these stability improvements, several works further aim to improve training efficiency in LLMs reinforcement learning. These works~\cite{entropyperspective,entropy-guided,80/20,enhancing,sent} design algorithms to explore GRPO that regulate entropy dynamics, operating on the clipping, token masking, temperature scheduling, advantage reweighting, etc. However, these works largely overlook the impact of Dynamic Sampling on training efficiency, which serves as also a critical improvement in DAPO.

In this work, we reveal that Dynamic Sampling, while enhancing accuracy through improved training stability, actually reduces training efficiency. Building on this analysis, we propose DA3PO, which improves the training efficiency of Dynamic Sampling while preserving the original training stability benefits and achieves significant improvements in accuracy across both model scales and all benchmarks.

\section{Proof of Theorem~\ref{thm:asym_amp}}
\label{app:proof_thm1}

\subsection{Proof of \texorpdfstring{Eq.~\eqref{eq:amp_correct}}{Eq. (X)}}

\begin{proof}
Since $R_{\mathrm{acc}}(o_i){=}1$ guarantees that at least one of the $G$ responses is correct, the event $\hat{p}=0$ (all responses incorrect) is impossible. Therefore, the sample space conditioned on $R_{\mathrm{acc}}(o_i){=}1$ is partitioned into two mutually exclusive cases: (i)~$\hat{p}=1$, where all $G$ responses are correct, and (ii)~$0<\hat{p}<1$, where at least one response is incorrect.

We first compute $P(\hat{p}=1 \mid R_{\mathrm{acc}}(o_i){=}1)$. Since $R_{\mathrm{acc}}(o_i){=}1$ already fixes one response as correct, by the independence of responses, the remaining $G{-}1$ responses must each be independently correct. It follows that
\begin{equation}\label{eq:thm1_allcorrect}
    P(\hat{p}=1 \mid R_{\mathrm{acc}}(o_i){=}1) = p^{G-1},
\end{equation}
which implies
\begin{equation}\label{eq:thm1_nondegen_correct}
    P(0<\hat{p}<1 \mid R_{\mathrm{acc}}(o_i){=}1) = 1 - p^{G-1}.
\end{equation}

When $\hat{p}=1$, all $G$ responses are correct and hence all rewards are identical. As a result, $z$-score normalisation yields zero variance, and we obtain
\begin{equation}\label{eq:thm1_collapse_correct}
    \mathbb{E}[\hat{A}_i \mid \hat{p}=1,\, R_{\mathrm{acc}}(o_i){=}1] = 0.
\end{equation}

By the law of total expectation over the partition established above, we have
\begin{equation}\label{eq:thm1_total_exp_correct}
    \begin{aligned}
        & \mathbb{E}[\hat{A}_i \mid R_{\mathrm{acc}}(o_i){=}1] \\
        = {} & P(\hat{p}=1 \mid R_{\mathrm{acc}}(o_i){=}1) \\
             & \quad \cdot \mathbb{E}[\hat{A}_i \mid \hat{p}=1,\, R_{\mathrm{acc}}(o_i){=}1] \\
             & + P(0<\hat{p}<1 \mid R_{\mathrm{acc}}(o_i){=}1) \\
             & \quad \cdot \mathbb{E}[\hat{A}_i \mid 0<\hat{p}<1,\, R_{\mathrm{acc}}(o_i){=}1].
    \end{aligned}
\end{equation}
By substituting Eq.~\eqref{eq:thm1_allcorrect}, \eqref{eq:thm1_nondegen_correct}, and \eqref{eq:thm1_collapse_correct} into Eq.~\eqref{eq:thm1_total_exp_correct}, we get
\begin{equation}\label{eq:thm1_simplified_correct}
    \begin{aligned}
        & \mathbb{E}[\hat{A}_i \mid R_{\mathrm{acc}}(o_i){=}1] \\
        = {} & (1-p^{G-1}) \\
             & \quad \cdot \mathbb{E}[\hat{A}_i \mid 0<\hat{p}<1,\, R_{\mathrm{acc}}(o_i){=}1].
    \end{aligned}
\end{equation}
Since $1 - p^{G-1} > 0$ for $p \in (0,1)$, rearranging Eq.~\eqref{eq:thm1_simplified_correct} yields
\begin{equation}
    \begin{aligned}
        & \mathbb{E}[\hat{A}_i \mid 0<\hat{p}<1,\, R_{\mathrm{acc}}(o_i){=}1] \\
        & \quad = \frac{\mathbb{E}[\hat{A}_i \mid R_{\mathrm{acc}}(o_i){=}1]}{1-p^{G-1}} \\
        & \quad = \alpha^{+}\cdot\mathbb{E}[\hat{A}_i \mid R_{\mathrm{acc}}(o_i){=}1],
    \end{aligned}
\end{equation}
which is Eq.~\eqref{eq:amp_correct}. We complete the proof.
\end{proof}

\subsection{Proof of \texorpdfstring{Eq.~\eqref{eq:amp_incorrect}}{Eq. (X)}}

\begin{proof}
Since $R_{\mathrm{acc}}(o_i){=}{-1}$ guarantees that at least one of the $G$ responses is incorrect, the event $\hat{p}=1$ (all responses correct) is impossible. Therefore, the sample space conditioned on $R_{\mathrm{acc}}(o_i){=}{-1}$ is partitioned into two mutually exclusive cases: (i)~$\hat{p}=0$, where all $G$ responses are incorrect, and (ii)~$0<\hat{p}<1$, where at least one response is correct.

We first compute $P(\hat{p}=0 \mid R_{\mathrm{acc}}(o_i){=}{-1})$. Since $R_{\mathrm{acc}}(o_i){=}{-1}$ already fixes one response as incorrect, by the independence of responses, the remaining $G{-}1$ responses must each be independently incorrect. It follows that
\begin{equation}\label{eq:thm1_allincorrect}
\begin{aligned}
    & P(\hat{p}=0 \mid R_{\mathrm{acc}}(o_i){=}{-1}) \\ 
    = & (1-p)^{G-1},
\end{aligned}
\end{equation}
which implies
\begin{equation}\label{eq:thm1_nondegen_incorrect}
\begin{aligned}
    & P(0<\hat{p}<1 \mid R_{\mathrm{acc}}(o_i){=}{-1}) \\ = & 1 - (1-p)^{G-1}.
\end{aligned}
\end{equation}

When $\hat{p}=0$, all $G$ responses are incorrect and hence all rewards are identical. As a result, $z$-score normalisation yields zero variance, and we obtain
\begin{equation}\label{eq:thm1_collapse_incorrect}
\begin{aligned}
    \mathbb{E}[\hat{A}_i \mid \hat{p} =0,\, R_{\mathrm{acc}}(o_i){=}{-1}] = 0.
\end{aligned}
\end{equation}

By the law of total expectation over the partition established above, we have
\begin{equation}\label{eq:thm1_total_exp_incorrect}
    \begin{aligned}
        & \mathbb{E}[\hat{A}_i \mid R_{\mathrm{acc}}(o_i){=}{-1}] \\
        = {} & P(\hat{p}=0 \mid R_{\mathrm{acc}}(o_i){=}{-1}) \\
             & \quad \cdot \mathbb{E}[\hat{A}_i \mid \hat{p}=0,\, R_{\mathrm{acc}}(o_i){=}{-1}] \\
             & + P(0<\hat{p}<1 \mid R_{\mathrm{acc}}(o_i){=}{-1}) \\
             & \quad \cdot \mathbb{E}[\hat{A}_i \mid 0<\hat{p}<1,\, R_{\mathrm{acc}}(o_i){=}{-1}].
    \end{aligned}
\end{equation}
By substituting Eq.~\eqref{eq:thm1_allincorrect}, \eqref{eq:thm1_nondegen_incorrect}, and \eqref{eq:thm1_collapse_incorrect} into Eq.~\eqref{eq:thm1_total_exp_incorrect}, we get
\begin{equation}\label{eq:thm1_simplified_incorrect}
    \begin{aligned}
        & \mathbb{E}[\hat{A}_i \mid R_{\mathrm{acc}}(o_i){=}{-1}] \\
        = {} & (1-(1-p)^{G-1}) \\
             & \quad \cdot \mathbb{E}[\hat{A}_i \mid 0<\hat{p}<1,\, R_{\mathrm{acc}}(o_i){=}{-1}].
    \end{aligned}
\end{equation}
Since $1 - (1-p)^{G-1} > 0$ for $p \in (0,1)$, rearranging Eq.~\eqref{eq:thm1_simplified_incorrect} yields
\begin{equation}
    \begin{aligned}
        & \mathbb{E}[\hat{A}_i \mid 0<\hat{p}<1,\, R_{\mathrm{acc}}(o_i){=}{-1}] \\
        & \quad = \frac{\mathbb{E}[\hat{A}_i \mid R_{\mathrm{acc}}(o_i){=}{-1}]}{1-(1-p)^{G-1}} \\
        & \quad = \alpha^{-}\cdot\mathbb{E}[\hat{A}_i \mid R_{\mathrm{acc}}(o_i){=}{-1}],
    \end{aligned}
\end{equation}
which is Eq.~\eqref{eq:amp_incorrect}. We complete the proof.
\end{proof}

\end{document}